\newtheorem{thm}{Theorem}
\newtheorem{lemm}[thm]{Lemma}
\theoremstyle{definition}
\newtheorem{mydef}{Definition}[section]
\newcommand{\interp}[1]{\llbracket #1 \rrbracket}
\tikzset{
    -Latex,auto,node distance =1 cm and 1 cm,semithick,
    state/.style ={ellipse, draw, minimum width = 0.7 cm},
    point/.style = {circle, draw, inner sep=0.04cm,fill,node contents={}},
    bidirected/.style={Latex-Latex,dashed},
    el/.style = {inner sep=2pt, align=left, sloped}
}
\title{Supporting Context Monotonicity Abstractions in Neural NLI Models}
\author{Julia Rozanova$~^{\dagger}$, Deborah Ferreira$^{\dagger}$, Mokanarangan Thayaparan$^{\dagger}$,\\ \textbf{Marco Valentino$^{\dagger}$, Andr\'e Freitas$^{\dagger}$$^{\ddagger}$} \\  Department of Computer Science, University of Manchester, United Kingdom$^{\dagger}$ \\  Idiap Research Institute, Switzerland$^{\ddagger}$ \\ {\tt \{firstname.surname\}} {\tt @manchester.ac.uk} \\}
\begin{document}
\maketitle
\begin{abstract}
Natural language \textbf{contexts} display logical regularities with respect to substitutions of 
related concepts: these are captured in a functional order-theoretic property called \emph{monotonicity}.
For a certain class of NLI problems where the resulting entailment label depends only on the context monotonicity
and the relation between the substituted concepts, we build on previous techniques that aim to improve the
performance of NLI models for these problems, 
as consistent performance across both upward and downward monotone contexts
still seems difficult to attain even for state of the art models.  
To this end, we reframe the problem of \textbf{context monotonicity classification} to make it 
compatible with transformer-based pre-trained NLI models and add this task to the training pipeline. 
Furthermore, we introduce a sound and complete simplified monotonicity logic formalism which 
describes our treatment of contexts as abstract units. 
Using the notions in our formalism, we adapt targeted challenge sets to investigate 
whether an intermediate context monotonicity classification task can aid NLI models' performance
on examples exhibiting monotonicity reasoning.  
\end{abstract}
\section{Introduction}

NLI has seen much success in terms of performance on large benchmark datasets, but there are still
expected systematic reasoning patterns that we fail to observe in the state of the art NLI models.
We focus in particular on \emph{monotonicity reasoning}: a large class of NLI problems that can be
described as a form of \emph{substitutional} reasoning which displays logical regularities with
respect to substitution of related concepts. 
In this setting, a subphrase $\mathbf{a}$ of a premise 
$p(\mathbf{a})$ is replaced with a phrase $\mathbf{b}$, yielding a hypothesis
$p(\mathbf{b})$. 

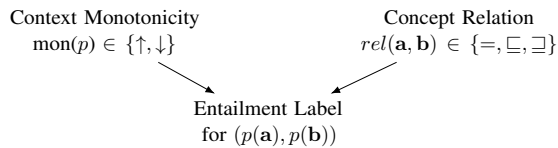
\begin{figure}[h!]
    \centering
\resizebox{\columnwidth}{!}{%
    \begin{tikzpicture}
            \node (1) [text width=3.5cm, align=center] at (0,0) {Context Monotonicity \\ mon($p$) $\in \{ \uparrow, \downarrow \}$};
        \node (2) [right = of 1]{};
        \node (3) [text width=4.5cm, align=center, right = of 2]{Concept Relation \\ $ rel(\mathbf{a}, \mathbf{b})\in \{
        =, \sqsubseteq, \sqsupseteq \} $};
        \node (4) [text width=3.5cm, align=center, below = of 2] {Entailment Label for $(p(\mathbf{a}), p(\mathbf{b}))$};
        \path (3) edge (4);
        \path (1) edge (4);
    \end{tikzpicture}%
    }
    \caption{The class of entailment problems under consideration: premise-hypothesis pairs 
   $(p(\mathbf{a}), p(\mathbf{b}))$
    whose entailment label depends only on the monotonicity of the context $p$ and the relation between $\mathbf{a}$ and $\mathbf{b}$.}
\end{figure}

Usually, the resulting entailment label relies on exactly two properties: the inclusion relation between concepts $\mathbf{a}$ and $\mathbf{b}$, and the \emph{systematic behaviour} 
of the context $p$ with respect to such relations.

In formal semantics, this is 
referred to as 
the \emph{monotonicity} of $p$ (where $p$ is either upward or downward monotone), 
and this reasoning pattern is referred to as \emph{monotonicity reasoning}.
Monotonicity reasoning is incredibly systematic, and thus is a much-probed behaviour in enquiries 
into the \emph{systematicity} and \emph{generalization capability} of neural NLI 
models \cite{goodwin-etal-2020-probing, yanakaMED, yanakaHELP, larryFragments, geiger}. 

Determining both the concept relation and the context monotonicity requires significant
linguistic understanding of syntactic structure and scope of operators, but in 
terms of reasoning, this is a very systematic pattern that nevertheless has a history of
causing problems for neural models.
It has been observed \cite{yanakaHELP, geiger} that current state of the art transformer-based NLI
models tend to routinely fail in downward
monotone contexts, such as those arising in the presence of negation or generalized quantifiers.
Recent strategies \cite{larryFragments} to address the shortcomings of NLI models in downward-monotone contexts have followed
the
\emph{inoculation} method \cite{inoculation}: additional training data which exhibits the target 
phenomenon (in this case, downward-monotone reasoning) is used to fine-tune existing models. 
This is done with some success in \cite{yanakaHELP, larryFragments} and \cite{geiger}. 
In contrast, we wish to investigate a \emph{transfer learning} strategy that directly targets the monotonicity question 
as an \emph{additional training task} to see if this can \emph{further} improve the monotonicity reasoning
performance of 
popular transformer-based NLI models.

Our contributions are as follows:

\begin{itemize}

    \item{Emphasizing monotonicity as a property of a \emph{context}, we introduce a sound and complete logical formalism which models the monotonicity reasoning phenomenon but abstracts away
    from specific linguistic operators, treating the context as a single abstract object.}
    
    \item Extending our treatment of contexts as individual objects to an experimental setting,
    we introduce an improvement in neural NLI model performance on monotonicity reasoning 
    challenge datasets by employing a
    context monotonicity classification task in the training pipieline of NLI models.
    To the best of our knowledge, this is the first use of neural models for this specific task.
    
    \item For this purpose, we adapt the \textbf{HELP} dataset \cite{yanakaHELP} into 
    a \textbf{HELP-Contexts} dataset, mimicking the structure of our logical formalism. 
    
    \item For the class of NLI problems described as \emph{monotonicity reasoning}, we demonstrate the impact of the proposed transfer strategy:
    we show that there can be a strong improvement 
    on downward monotone contexts, previously known to be a bottleneck for neural NLI models.
    As such, this shows the benefit of directly targeting intermediate abstractions 
    (in this case, monotonicity) present in logical formalisms that underly the true label. 
    
\end{itemize}
\section{Contexts and Monotonicity}
\subsection{Contexts}
Informally, we treat a natural language \emph{context} as a sentence with a ``gap", 
represented by a variable symbol. 

\noindent\textbf{A context $p(x)$}:

I ate some $x$ for breakfast. 

\noindent \textbf{A sentence} $S = p( \mbox{`fruit'})$:
    
I ate some fruit for breakfast. \\

Although every sentence can be viewed as a context with an insertion in as many 
ways as there are n-grams in the sentence, in this work we shall consider in 
particular those contexts where the variable corresponds to a slot in the expression
that may be filled by an \emph{entity} reference, such as a noun or noun phrase.
In the view of Montague-style formal semantics, these contexts correspond to expressions of type $<e,t>$.
    
\subsection{Monotonicity}
Given a natural language context $p$ and a pair of nouns/noun phrases $(\textbf{a}, \textbf{b})$, 
we may create a natural language sentence pair $(p(\textbf{a}),p(\textbf{b}))$ by substituting 
the respective subphrases into the natural language context.
When treated as a premise-hypothesis pair (as in the experimental NLI task setting), 
the gold entailment label has 
a strong relationship with the kinds of \emph{relations} that exist between the insertions $\mathbf{a}$ and $\mathbf{b}$.

In the seminal works on monotonicity \cite{valencia, vanbenthem}, the relations that are studied are \emph{semantic containment}
relations, which are defined analogously to set-theoretic containment relations ($\subseteq$).

\begin{table}[h]
\resizebox{\columnwidth}{!}{
\begin{tabular}{@{}lll@{}}
\toprule
                             & $\mathbf{a}$                            & $\mathbf{b}$              \\ \midrule
$\equiv$                      & couch                        & sofa           \\ \midrule
\multirow{3}{*}{$\sqsubset$} & apples                       & fruit          \\
                             & South African soccer players & soccer players \\
                             & dogs with hats               & dogs           \\ \bottomrule
\end{tabular}%
}
\caption{Examples of the semantic containment relation between concept pairs.}
\end{table}

For insertions related by $\sqsubset$, the gold entailment label depends on one other property:
the combined \emph{monotonicity profile} of all the linguistic operators within whose scope
the insertion is located. If the final monotonicity marking in the insertion's position is ``upward",
the gold label is entailment. However, if it is ``downward", we can deduce entailment of the 
reversed sentence pair, $(p (\mathbf{b}), p(\mathbf{a}))$. 
Linguistic operators such as "not" are downward monotone, while generalized
quantifiers such as ``every" have a more complex monotonicity profile: downward-monotone in the first argument
and upward-monotone in the second argument.
The monotonicity properties of all the operators compose along the syntax tree, culminating in a final
monotonicity marking for the ``$x$" position in the context (the monotonicity is independent of the
inserted word). 
\begin{figure}[h!]
    \centering
\resizebox{0.8\columnwidth}{!}{%
    \includegraphics{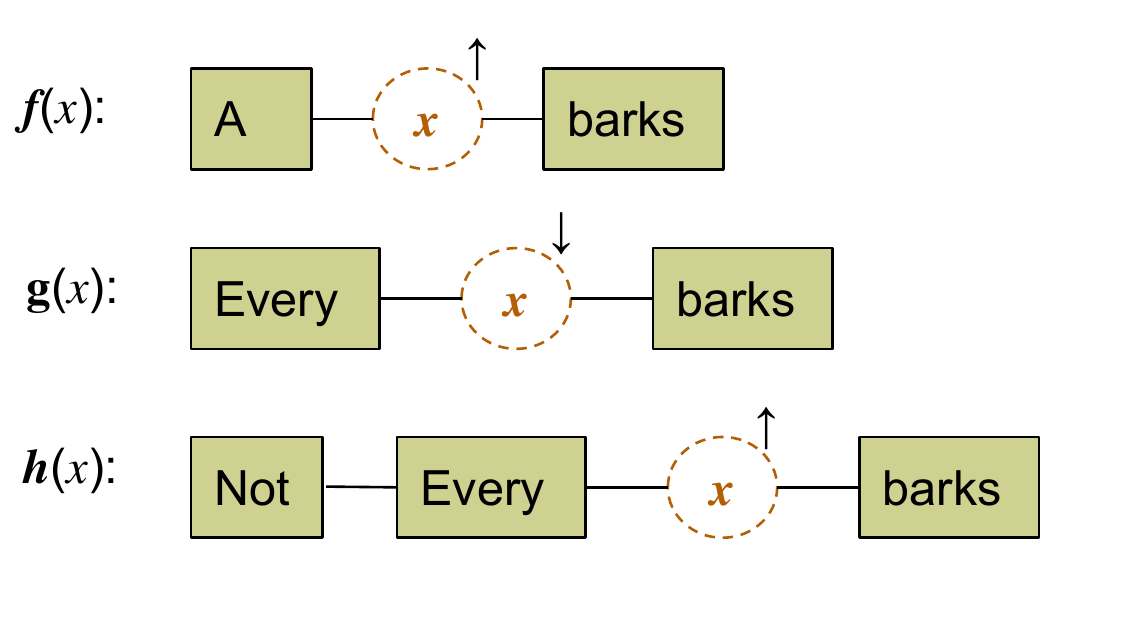}%
}
    \caption{Natural language \emph{contexts} have a property which dictates logical regularities with respect to concept hierarchies: like numerical functions, they can be upward monotone or downward monotone.  
    }
\end{figure}
It is this final monotonicity label that determines the entailment patterns with respect to insertion 
relations. Although there are formalisms that model this logical behaviour \cite{icard-etal-2017-monotonicity}, they aim to model the behaviour of 
each linguistic linguistic operator and the way they 
compose given the parse tree of a sentence. 

We consider a simplification of this behaviour by abstracting away the linguistic specifics of the 
context, treating it as a single abstract object. As such, we are not concerned with the exact monotonicity profiles of 
all the linguistic operators that culminate in the monotonicity of the final context.
We describe this behaviour with a simple logic system extending the $\mathcal{L}(all)$ 
logic of \cite{mossall} with the abstracted behaviour of upward and donward monotone contexts.
We include a proof that this adaptation is still sound and complete. 

\subsection{A Context-Abstracted Monotonicity Logic}
We extend the syllogistic syntax of the language $\mathcal{L}(all)$ included in \cite{syllogistic} and \cite{mossall}. 
In keeping with that style, we present the syntax as natural language sentences. However, we include
the corresponding first order formulae as well. In the subsequent proofs, we mix the stylizations 
somewhat for readability, but the table below should serve as a reminder for the exact correspondence. 
\begin{mydef}
Let the language $\mathcal{L}$ consist of the following:
\begin{enumerate}
    \item A countable set $\mathbf{A}$ of constant symbols $\textbf{a}, \textbf{b}, \textbf{a}_1, \textbf{b}_1, \ldots$
    \item Exactly two variables, $x$ and $y$
    \item A binary relation symbol $\sqsubseteq$.
    \item A set $\mathbf{P}$ of relation symbols $\sqsubseteq_p$ indexed by a countable set ~$p, p_1, \ldots$.
\end{enumerate}

\textbf{Only} the following are considered sentences in the language $\mathcal{L}$:
\begin{table}[h]
    \centering
    \resizebox{\columnwidth}{!}{
    \begin{tabular}{p{0.5\columnwidth}|l}
        Natural Language Stylization & FOL Stylization \\
        \hline
        all $\mathbf{a}$ are $\mathbf{b}$ & $\mathbf{a} \sqsubseteq \mathbf{b}$\\
        if $p(\mathbf{a})$ then $p(\mathbf{b})^*$ & $\mathbf{a} \sqsubseteq_p \mathbf{b}^*$\\
        $p$ is upward monotone & $\forall_{x,y} (x \sqsubseteq y \leftrightarrow x \sqsubseteq_p y)$\\
        $p$ is downward monotone & $\forall_{x,y} (x \sqsubseteq y \leftrightarrow y \sqsubseteq_p x)$\\
    \end{tabular}%
    }
    \caption{
    $^*$ For every natural language context $p$ in a set $P$ of contexts, and where $p(\mathbf{a})$ is the substitution of $\mathbf{a}$ into the natural language context $p$.}
    \label{tab:my_label}
\end{table}
\end{mydef}

This can in many ways be seen as a simplification of previous formalisms \cite{icard-etal-2017-monotonicity, hu-moss} based on either extending 
the syllogistic logic $\mathcal{L}(all)$ \cite{mossall} or extending typed lambda calculus with monotonicity behaviour. 
The key difference of this approach is the abstraction away from specific linguistic operators and their monotonicity profiles. 
On one hand, we are thus only modeling one level of linguistic
compositionality, but since the monotonicity profile of every linguistic operator composes into one monotonicity marker which affects 
the final entailment label (for this class of problem), it encompasses all of the linguistically-specific approaches.
This is useful when the monotonicity of a context can be determined by an external system such as a neural classifier or the ccg-2-mono system \cite{hu-moss} . 
In this case, the monotonicity marking of the entire context is explicit. 

\subsection{Semantics}
\begin{mydef}
A model $\mathcal{M}$ of the language $\mathcal{L}$ is
a structure
$$ \mathcal{M} = (M, \llbracket \cdot \rrbracket )$$
consisting of a set $M$ and an interpretation function  $\llbracket \cdot \rrbracket $
where $\llbracket \mathbf{a} \rrbracket \subseteq M$, $ \llbracket \sqsubseteq \rrbracket $
is the $\subseteq$ relation on the powerset $\mathcal{P}(M)$
and $\llbracket \sqsubseteq_p \rrbracket \subseteq \mathcal{P}(M)\times 
\mathcal{P}(M)$ 
is any binary relation on $\mathcal{P}(M)$.
Truth of a formula with respect to a given model is defined as follows: 

\begin{equation}
\small
    \mathcal{M} \models  \mathbf{a} \sqsubseteq \mathbf{b} \iff \interp{\mathbf{a}}
    \subseteq \interp{\mathbf{b} }
\end{equation}
\begin{equation}
\small
    \mathcal{M} \models  \mathbf{a} \sqsubseteq_p \mathbf{b} \iff \interp{\mathbf{a}},\interp{\mathbf{b}} \in \interp{\sqsubseteq_p}
\end{equation}
\begin{equation}
\small
    \mathcal{M} \models \forall_{x,y} 
    (x \sqsubseteq y \leftrightarrow x \sqsubseteq_p y) \iff \subseteq = \interp{\sqsubseteq_p}
\end{equation}
\begin{equation}
\small
    \mathcal{M} \models \forall_{x,y} 
    (x \sqsubseteq y \leftrightarrow y \sqsubseteq_p x)
    \iff \supseteq = \interp{\sqsubseteq_p}
\end{equation}

\end{mydef}

\subsection{Proof Calculus}
Our language will be equipped with the following deduction rules and axioms:

\begin{displaymath}
\prftree[r]{$\scriptstyle BARBARA$}
{\textsc{all } ~\mathbf{a} ~\textsc{are } \mathbf{b} }
{\textsc{all } ~\mathbf{b} ~\textsc{are } \mathbf{c} }
{\textsc{all } ~\mathbf{a} ~\textsc{are } \mathbf{c} }
\end{displaymath}
\begin{displaymath}
\prftree[r]{$\scriptstyle\uparrow$}
{\textsc{all } ~\mathbf{a} ~\textsc{are } \mathbf{b} }
{~p ~\textsc{is upward monotone}}
{\textsc{if} ~p(\mathbf{a}) ~\textsc{then} ~p(\mathbf{b})}
\end{displaymath}
\begin{displaymath}
\prftree[r]{$\scriptstyle\downarrow$}
{\textsc{all } ~\mathbf{a} ~\textsc{are } \mathbf{b} }
{~p ~\textsc{is downward monotone}}
{\textsc{if} ~p(\mathbf{b}) ~\textsc{then} ~p(\mathbf{a})}
\end{displaymath}
\begin{displaymath}
\prftree[r]{$\scriptstyle Axiom 1$}
{\textsc{all} ~\mathbf{a} ~\textsc{are} ~\mathbf{a}}
\end{displaymath}
\begin{displaymath}
\prftree[r]{$\scriptstyle Axiom 2$}
{\textsc{if} ~p(\mathbf{a}) ~\textsc{then} ~p(\mathbf{a})}
\end{displaymath}

\subsection{Soundness and Completeness}
\begin{mydef}
For a set of $\mathcal{L}$-sentences $\Gamma$, we can define the \emph{canonical model}
$\mathcal{M}_{\Gamma}$ as follows:

First, let $M$ be the set of atomic constant symbols $\mathbf{A}$ and define 
a relation $\leq$ on $\mathbf{A}$ where $\textbf{a} \leq \textbf{b} \iff \Gamma
\vdash a \sqsubset b$. 
The interpretation function $\interp{\cdot}$ is defined as follows:\\

Define $\interp{\textbf{a}} = \downarrow{\textbf{a}} = \{ \textbf{b} \in P \mid \textbf{b} \leq \textbf{a} \}$.

Define $\interp{\sqsubseteq}$ as the $\subseteq$ relation on $\mathcal{P}(M)$.

For each $p \in \mathbf{P}$, we have a conditional definition:

If and only if ``$p$ is upward monotone'' is the only sentence about $p$ entailed by $\Gamma$,
we define $\interp{\sqsubseteq_p} = \subseteq$. 
 
If and only if ``$p$ is downward monotone'' is the only sentence about $p$ entailed by $\Gamma$,
 we define $\interp{\sqsubseteq_p} = \subseteq$. 
 
In all other cases, 
$\interp{\sqsubseteq_p}$ is defined as set equality in $\mathcal{P}(M)$. 
\end{mydef}

\begin{lemm}\label{the_lemma}
For a set $\Gamma$ of $\mathcal{L}$-sentences, the canonical model
$\mathcal{M}_{\Gamma} \models \Gamma$. 
\end{lemm}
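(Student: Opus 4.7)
The plan is to verify $\mathcal{M}_\Gamma \models \phi$ for every $\phi \in \Gamma$ by a case analysis on the four syntactic forms of sentences allowed in $\mathcal{L}$, in each case unfolding the semantic clause and discharging the resulting set-theoretic condition via the proof calculus.

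First I would handle the atomic case $\phi = \mathbf{a} \sqsubseteq \mathbf{b}$. Here we need $\downarrow\mathbf{a} \subseteq \downarrow\mathbf{b}$: picking $\mathbf{c} \in \downarrow\mathbf{a}$ means $\Gamma \vdash \mathbf{c} \sqsubseteq \mathbf{a}$, and since $\phi \in \Gamma$ gives $\Gamma \vdash \mathbf{a} \sqsubseteq \mathbf{b}$, an application of \textsc{BARBARA} yields $\Gamma \vdash \mathbf{c} \sqsubseteq \mathbf{b}$, i.e. $\mathbf{c} \in \downarrow\mathbf{b}$. Axiom~1 additionally guarantees $\mathbf{a} \in \downarrow\mathbf{a}$, which is used later to keep $\interp{\mathbf{a}}$ nonempty.

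Next I would take the two monotonicity sentences. For $\phi =$ ``$p$ is upward monotone'', the semantic condition demands $\interp{\sqsubseteq_p} = \subseteq$; if $\phi$ is the only sentence about $p$ entailed by $\Gamma$ the canonical definition gives this directly, and the symmetric argument handles ``$p$ is downward monotone'' (reading the second clause of the definition as $\interp{\sqsubseteq_p} = \supseteq$, which is the only choice that makes the clause sound). For $\phi = \mathbf{a} \sqsubseteq_p \mathbf{b}$, we need $(\downarrow\mathbf{a},\downarrow\mathbf{b}) \in \interp{\sqsubseteq_p}$, and I would split on whether $\Gamma$ also entails a monotonicity sentence about $p$: if it entails ``$p$ upward'' we use $\interp{\sqsubseteq_p} = \subseteq$ and reduce (via the $\uparrow$ rule, now read backwards in the canonical setting) to the atomic case above; if ``$p$ downward'' is entailed we dualize via $\downarrow$; and otherwise we fall to the equality fallback, where Axiom~2 covers the degenerate $\mathbf{a} = \mathbf{b}$ instance.

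The main obstacle is the interaction between $\sqsubseteq_p$-sentences and monotonicity sentences about the same $p$. The canonical definition as stated assigns $\interp{\sqsubseteq_p}$ to set equality whenever more than one sentence about $p$ is entailed by $\Gamma$, which is too coarse to directly validate both a monotonicity assertion and a nontrivial $\sqsubseteq_p$-sentence. I would therefore argue that the definition is best read conditionally on which monotonicity sentence (if any) is entailed, and then show by induction on derivations that the entailed $\sqsubseteq_p$-sentences are forced to track the atomic $\sqsubseteq$-order in the expected direction. This proof-theoretic coherence step, rather than the routine unfolding of the other clauses, is where the real work lies, and is what makes the choice of canonical model nontrivial.
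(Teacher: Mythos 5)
Your overall strategy---a case split over the four sentence forms, unfolding the canonical interpretation in each---is exactly what the paper's (very terse) proof intends, and you correctly repair two defects in the canonical-model definition along the way: the second conditional clause must read $\interp{\sqsubseteq_p}=\supseteq$, and ``the only sentence about $p$'' must mean the only \emph{monotonicity} sentence about $p$ (otherwise Axiom 2 alone would push every $p$ into the equality fallback). Your treatment of the atomic case and of a monotonicity sentence when exactly one of the two is entailed is correct.

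There is, however, a genuine gap in the case $\phi=\mathbf{a}\sqsubseteq_p\mathbf{b}$ with $\Gamma\vdash{}$``$p$ is upward monotone''. You need $\downarrow\mathbf{a}\subseteq\downarrow\mathbf{b}$, i.e.\ $\Gamma\vdash\mathbf{a}\sqsubseteq\mathbf{b}$, but the calculus has no rule recovering $\mathbf{a}\sqsubseteq\mathbf{b}$ from $\mathbf{a}\sqsubseteq_p\mathbf{b}$ and the monotonicity assertion: the $\uparrow$ rule is not invertible, and your proposed repair---an induction on derivations showing that entailed $\sqsubseteq_p$-sentences track the atomic order---fails at the base case where $\mathbf{a}\sqsubseteq_p\mathbf{b}$ is itself a member of $\Gamma$ rather than the conclusion of a rule. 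Concretely, if $\Gamma$ consists of $\mathbf{a}\sqsubseteq_p\mathbf{b}$ together with the sentence asserting that $p$ is upward monotone (with $\mathbf{a}\neq\mathbf{b}$), then the only derivable atomic facts are the Axiom 1 instances, so $\interp{\mathbf{a}}=\{\mathbf{a}\}\not\subseteq\{\mathbf{b}\}=\interp{\mathbf{b}}$ and hence $\mathcal{M}_\Gamma\not\models\mathbf{a}\sqsubseteq_p\mathbf{b}$, even though $\Gamma$ is satisfiable. You also leave untreated the case the paper itself flags as crucial: when $\Gamma$ entails \emph{both} monotonicity sentences for $p$, the fallback $\interp{\sqsubseteq_p}={}$equality must simultaneously satisfy the semantic clauses demanding $\interp{\sqsubseteq_p}=\subseteq$ and $\interp{\sqsubseteq_p}=\supseteq$, and neither your sketch nor the paper's appeal to the down-set equivalences closes this. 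To be fair, these are defects of the lemma as literally stated rather than ones you introduced, but your proof does not go through as written; any repair must either restrict the admissible $\Gamma$ or strengthen the calculus and the canonical-model construction.
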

\begin{proof}
The key parts of the proof are a consequence of the fact that
$\downarrow a \subseteq \downarrow b \iff a \leq b$, 
and $\downarrow a \supseteq \downarrow b \iff b \leq a$ which is crucial to the case that
$\Gamma$ contains both ``$p$ is upward monotone" and ``$p$ is downward monotone".
The rest is a routine consequence of the definitions.
\end{proof}
\begin{thm}{Soundness and Completeness}
\end{thm}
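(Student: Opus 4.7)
The proof splits into soundness and completeness directions, both following the standard template for syllogistic systems extended with the monotonicity apparatus.

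For \textbf{soundness}, the plan is a routine induction on the length of a derivation $\Gamma \vdash \phi$: for each axiom and rule, verify that truth in every model of $\Gamma$ is preserved. Axiom 1 and the BARBARA rule reduce to reflexivity and transitivity of $\subseteq$ on $\mathcal{P}(M)$, respectively. Axiom 2 is handled by the intended reflexivity built into $\interp{\sqsubseteq_p}$. For the $\uparrow$ rule I would unpack the defining equivalence of ``$p$ is upward monotone'' to conclude $\interp{\sqsubseteq_p} = \subseteq$, so that $\interp{\mathbf{a}} \subseteq \interp{\mathbf{b}}$ immediately yields $(\interp{\mathbf{a}}, \interp{\mathbf{b}}) \in \interp{\sqsubseteq_p}$; the $\downarrow$ rule is symmetric with $\supseteq$.

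For \textbf{completeness}, the plan is the usual canonical-model argument. Given $\Gamma \models \phi$, Lemma~\ref{the_lemma} provides $\mathcal{M}_\Gamma \models \Gamma$, hence $\mathcal{M}_\Gamma \models \phi$, so it suffices to prove a ``truth-to-derivability'' step: $\mathcal{M}_\Gamma \models \phi$ implies $\Gamma \vdash \phi$. I would carry this out by case analysis on the shape of $\phi$. If $\phi$ is ``all $\mathbf{a}$ are $\mathbf{b}$'', then $\downarrow \mathbf{a} \subseteq \downarrow \mathbf{b}$, together with $\mathbf{a} \in \downarrow \mathbf{a}$ (a consequence of Axiom 1), gives $\mathbf{a} \leq \mathbf{b}$, i.e.\ $\Gamma \vdash \mathbf{a} \sqsubseteq \mathbf{b}$. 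If $\phi$ is a monotonicity assertion about $p$, the conditional clauses defining $\interp{\sqsubseteq_p}$ in the canonical model make $\mathcal{M}_\Gamma \models \phi$ equivalent to $\Gamma$ itself entailing that very assertion, from which $\Gamma \vdash \phi$ is immediate.

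The interesting case is $\phi = $ ``if $p(\mathbf{a})$ then $p(\mathbf{b})$'', and this is where I expect the main obstacle. I would split on which clause of the canonical-model definition fixes $\interp{\sqsubseteq_p}$. In the upward subcase $\Gamma$ entails ``$p$ is upward monotone'', and the resulting $\interp{\mathbf{a}} \subseteq \interp{\mathbf{b}}$ gives $\Gamma \vdash \mathbf{a} \sqsubseteq \mathbf{b}$ by the first case, so an application of the $\uparrow$ rule closes the goal; the downward monotone subcase is symmetric using the $\downarrow$ rule. The problematic subcase is the ``default'' one, where $\interp{\sqsubseteq_p}$ is set equality because $\Gamma$ entails neither monotonicity statement (or entails both, the scenario highlighted in the sketch of Lemma~\ref{the_lemma}). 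There, $\mathcal{M}_\Gamma \models \phi$ only supplies $\interp{\mathbf{a}} = \interp{\mathbf{b}}$, i.e.\ $\Gamma \vdash \mathbf{a} \sqsubseteq \mathbf{b}$ and $\Gamma \vdash \mathbf{b} \sqsubseteq \mathbf{a}$, yet with no monotonicity premise for $p$ neither the $\uparrow$ nor the $\downarrow$ rule fires, and Axiom 2 only applies when $\mathbf{a}$ and $\mathbf{b}$ are syntactically identical. I would address this either by refining the canonical model to be less discriminating on $\sqsubseteq_p$ (so that $\mathcal{M}_\Gamma \not\models \phi$ in this subcase unless $\phi$ is already derivable from the two monotonicity rules and the $\sqsubseteq$-derivations at hand) or, if necessary, by supplementing the calculus with a principle for inter-substitutability; pinning down the exact formulation that keeps the canonical-model construction honest is the delicate step.
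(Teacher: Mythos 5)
Your plan follows the paper's proof essentially line for line: soundness by rule-by-rule verification (which the paper omits entirely), completeness via Lemma~\ref{the_lemma} and a case split on the syntactic form of $\phi$, with the conditional case further split according to which clause of the canonical-model definition fixes $\interp{\sqsubseteq_p}$. Your handling of the ``all $\mathbf{a}$ are $\mathbf{b}$'' case and the monotonicity-assertion case matches the paper's (the paper delegates the former to the completeness theorem for $\mathcal{L}(all)$).

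The one place you diverge is the default subcase of the conditional, and your suspicion there is well founded. The paper closes that subcase by arguing that $\interp{\mathbf{a}} = \interp{\mathbf{b}}$, i.e.\ equal down-sets, implies $\mathbf{a} = \mathbf{b}$, whence Axiom~2 applies ``trivially.'' But the relation $\leq$ on constants is defined by derivability and is only a preorder: two \emph{distinct} constants with $\Gamma \vdash \mathbf{a} \sqsubseteq \mathbf{b}$ and $\Gamma \vdash \mathbf{b} \sqsubseteq \mathbf{a}$ have equal down-sets without being syntactically identical, Axiom~2 does not fire, and no rule of the calculus produces ``if $p(\mathbf{a})$ then $p(\mathbf{b})$'' absent a derivable monotonicity premise for $p$. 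Indeed, for such a $\Gamma$ one can exhibit a genuine model (take $\interp{\sqsubseteq_p} = \emptyset$) refuting the conditional, so the semantic entailment $\Gamma \models \phi$ fails and the defect lies in the canonical model being too generous, exactly as your first proposed repair suggests: $\interp{\sqsubseteq_p}$ in the default case should track only the derivable conditionals (which in turn calls for a congruence principle so that the relation is well defined on down-sets). So your proposal is not missing an idea the paper supplies; rather, you have correctly located the step where the paper's own argument is unsound and sketched the right kind of fix, albeit without carrying it out.
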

\begin{proof}
We leave the perfunctory soundness proof as an exercise to 
the reader.
Towards showing completeness, let $\Gamma$ be a set of 
$\mathcal{L}$-sentences and $\phi$ another $\mathcal{L}$-sentence.
Suppose that for every model $\mathcal{M}$
we have that $\Gamma \models \phi$. 
In particular, by lemma \ref{the_lemma}, $\mathcal{M}_\Gamma \models \phi.$ All 
further discussion occurs in this specific model.
The sentence $\phi$ may have one of four forms. 

Suppose firstly that $\phi$
is ``if $p(\mathbf{a})$ then $p(\mathbf{b})$". Thus, $(\interp{\mathbf{a}}, \interp{\mathbf{b}}) 
\in \interp{\sqsubseteq_p}$. Since the interpretation of $\sqsubseteq_p$
depends on the description of $p$ entailed by $\Gamma$, there are three cases:
Firstly, if $\Gamma \vdash$ ``$p$ is upward monotone'' only, then it follows that $\interp{\mathbf{a}}\subseteq \interp{\mathbf{b}}$. Since this holds 
if and only if $a \leq b$ by a basic property of down-sets, 
then we will have $\Gamma \vdash a \sqsubseteq b$ and 
$\Gamma \vdash$ ``$p$ is upward monotone", so that $\Gamma \vdash$ 
``if $p(\mathbf{a})$ then $p(\mathbf{b})$" by the $\uparrow$ deduction
rule.

On the other hand, if we had that $\Gamma \vdash$ ``$p$ is downward 
monotone'' only, we can similarly deduce that $\interp{\mathbf{a}} \supseteq \interp{\mathbf{b}}$, and repeating the same reasoning arrive at 
$\Gamma \vdash$ 
``if $p(\mathbf{a})$ then $p(\mathbf{b})$".
In the last option for $p$, we either have that $\Gamma$ proves neither or
both of the statements ``$p$ is upward monotone" and ``$p$ is downward 
monotone", and in either case $\interp{\sqsubseteq_p}$ is set equality in $\mathcal{M}_\Gamma.$ 
As such, we will be able to conclude 
that $\interp{\mathbf{a}} = \interp{\mathbf{b}}$. Equal
down-sets imply that $a=b$, so that trivially $\Gamma \vdash$ ``if $p(\mathbf{a})$ then $p(\mathbf{b})$"
Hence, in all of these cases, $\Gamma \vdash \phi$. 

If $\phi$ is the sentence ``$p$ is upward monotone'' (we omit the dual, which is similar), 
then truth in the canonical model gives us that $\subseteq = \interp{\sqsubseteq_p}$. In the 
$\mathcal{M}_\Gamma$, this happens exactly when $\Gamma \vdash$ ``$p$ is upward monotone'' .
The last option for $\phi$ is covered in the completeness theorem for the basic syllogistic logic with the
``\textsc{Barbara}" deduction rule and Axiom 1.

In conclusion, in all cases we may deduce that $\Gamma \vdash \phi$. 
\end{proof}

\section{Related Work}
The study of monotonicity in natural language has a strongly developed linguistic and mathematical theoretical groundwork,
dating back to the monotonicity calculus of \cite{valencia} and in semantic studies such as \cite{vanbenthem}.
Its formal treatments have led to the expansion of typed lambda calculus with an order relation so as to model
this order-theoretic behaviour, resulting in a corresponding deduction system and completeness theorem
in \cite{icard-etal-2017-monotonicity}. 
There are varying presentations and some variation in terminology, but for the most part \emph{monotonicity} refers to the
order-theoretic property of the context as a function,
while the term \emph{polarity} usually refers to the tag assigned to the node in a CCG parse tree or a word in a sentence. 
The inferential mechanism based on monotonicity properties of quantifiers, determiners and contexts in general is sometimes
referred to as \emph{natural logic}, and the underlying principles of natural logic applying to set-theoretic 
concept relations has led to work on \emph{generalized monotonicity} \cite{maccartney-manning-2009-extended}. 
However, the additional relations such as negation, alternation and cover are no longer 
order-theoretic notions.

\paragraph{Symbolic Implementations}
There are two flavours of implementations that result in the deductions allowed by monotonicity reasoning.
Firstly, works such as \cite{monalog, abzianidze-2015-tableau} rely on linguistically-informed polarity 
markings on the nodes of CCG parse trees. They require accurate parses and expertly hand-crafted linguistic rules to 
mark the nodes with polarity tags, as in \cite{hu-moss}.
In \cite{monalog}, a premise is tagged for monotonicity and a knowledge base of hypotheses created by a substitution
known to be truth-preserving is generated. Candidate hypotheses are compared with this set, checking for exact 
matches.
On the other hand, \cite{abzianidze-2015-tableau} uses the CCG parses to further translate to a lambda logical form
for use in a deduction method inspired by tableau calculus.
These approaches differ from strategies such as \cite{maccartney-manning-2009-extended}, which require an 
\emph{edit sequence} which transforms the premise into the hypothesis. Atomic edits are
tagged with generalized entailment relations which are combined with a join
operator based on relational composition to determine whether the transformation is overall truth-preserving, 
hence yielding a hypothesis entailed by the premise.
Later, \cite{angeli2014naturalli} treated these atomic edits as edges in a graph and phrased entailment detection
as a graph search problem.
Concepts from symbolic approaches to NLI have also been applied in 
symbolic question answering systems (such as in \cite{bobrow}), and hybridized with 
neural systems (such as in \cite{hynli}).

\paragraph{Neural NLI Models and Monotonicity}
State of the art NLI models have previously been shown \cite{yanakaHELP, geiger} to perform poorly on examples where the 
context $f$ is \emph{downward monotone}, as occurs in the presence
of negation and various generalized quantifiers such as ``every" and ``neither".
Benchmark datasets such as MNLI are somewhat starved of such examples, as observed by 
\cite{yanakaHELP}. As a consequence, the models trained on such benchmark datasets as
MNLI not only fail in downward monotone contexts, but \emph{systematically} fail: 
they tend to treat all examples as if the contexts are upward monotone, predicting the 
\emph{opposite} entailment label with high accuracy \cite{yanakaHELP, geiger}. 
Data augmentation techniques and additional fine-tuning with an inoculation
\cite{inoculation} strategy have been attempted
in \cite{yanakaHELP, larryFragments} and \cite{geiger}.
In the latter case, performance on a challenge test set improved without much performance loss on the 
original benchmark evaluation set (SNLI), 
but in \cite{yanakaHELP} there was a significant decrease in performance on the MNLI evaluation set.
These studies form the basis on which we aim to build, and their choice of evaluation datasets and models
inspires our own choices.

\begin{table}[h]
\centering  
\resizebox{\columnwidth}{!}{%
\begin{tabular}{@{}llllll@{}}
\toprule
                                                                       & \multicolumn{1}{c}{}                                             & \multicolumn{4}{c}{\textbf{Previous Work}}                                                                                                                                                                                                                    \\ \midrule
\multicolumn{2}{c}{\textbf{Evaluation Datasets}}                                                                                          & \begin{tabular}[c]{@{}l@{}}Geiger 2020\\ (Neural)\end{tabular} & \begin{tabular}[c]{@{}l@{}}Yanaka 2020\\ (Neural)\end{tabular} & \begin{tabular}[c]{@{}l@{}}Moss 2019\\ (Neural)\end{tabular} & \begin{tabular}[c]{@{}l@{}}Hu 2020\\ (Symbolic)\end{tabular} \\ \midrule
\begin{tabular}[c]{@{}l@{}}Large, \\ Broad Coverage\end{tabular}       & MNLI Test                                                        &                                                                & x                                                              &                                                              &                                                              \\
                                                                       & \begin{tabular}[c]{@{}l@{}}MNLI Dev\\ (Mismatched)\end{tabular}  &                                                                &                                                                & x                                                            &                                                              \\
                                                                       & SNLI Test                                                        & x                                                              &                                                                & x                                                            &                                                              \\ \midrule
\begin{tabular}[c]{@{}l@{}}Small,\\ Targeted \\ Phenomena\end{tabular} & MED                                                              &                                                                & x                                                              &                                                              &                                                              \\
                                                                       & SICK                                                             &                                                                & x$^*$                                                               &                                                              & x                                                            \\
                                                                       & FraCaS                                                           &                                                                & x$^*$                                                   &                                                              & x                                                            \\
                                                                       & MoNLI Test                                                       & x                                                              &                                                                &                                                              &                                                              \\
                                                                       & \begin{tabular}[c]{@{}l@{}}Monotonicity\\ Fragments\end{tabular} &                                                                &                                                                & x                                                            & x                                                            \\ \bottomrule
\end{tabular}%
}
\caption{Evaluation datasets used in previous work investigating monotonicity reasoning. Positions marked $*$
indicate that the dataset is included in another used evaluation dataset.}
\label{dataset_table}

\end{table}

Neural Transformer-based language models have been shown to implicitly model syntactic structure 
\cite{hewitt}.
There is also evidence to suggest that these NLI models are at least representing
the concept relations quite well and using this information to predict the entailment label, 
as corroborated by a study based on \emph{interchange interventions} in \cite{geiger}.

We hypothesise that such models have the capacity for learning monotonicity features. 
The extent to which the representations capture monotonicity information in the contextual 
representations of tokens in the sequence is not yet well understood, and this is an investigation
we wish to initiate and encourage with this work.

\section{Experiments}
Building on the observations in the above-mentioned previous papers, we ask the following questions:
\begin{itemize}
    \item Can a context monotonicity classification task in the model training pipeline further 
    improve performance on
    targeted evaluation sets which test monotonicity reasoning?
    \item Does this mitigate the decrease in performance on benchmark NLI datasets?
\end{itemize}

Our investigation proceeds in three parts:
Firstly, we attempt to fine-tune a SOTA NLI model for a context monotonicity classification task.

Secondly, we retrain the above model for NLI and evaluate the performance on several evaluation datasets which specifically target
examples of both upward and downward monotonicity reasoning. We examine whether there is any improvement over a previously
suggested approach on fine-tuning on a
large, automatically generated dataset (HELP) from \cite{yanakaHELP}. 

\paragraph{Models}
We start with existing NLI models pretrained on benchmark NLI datasets.
In particular (and for best comparison with related studies) we use RoBERTa \cite{roberta} pretrained 
on MNLI \cite{mnli} and BERT \cite{devlin-etal-2019-bert} pretrained on SNLI \cite{snli}.
These are two benchmark NLI datasets which contain examples derived from naturally occurring text and 
crowd-sourced labels, aiming for scale and broad coverage. 
We do not deviate from the architecture, as we are only investigating the effect of training on different
tasks (monotonicity classification and NLI) and datasets.

\subsection{Retraining NLI Models to Classify Context Monotonicity}\label{classif_section}
Traditionally, symbolic approaches treat monotonicity classification as the task of labeling 
of each node in a CCG parse tree with either an upward or downward polarity marking. 
Our emphasis of monotonicity as a property of a \emph{context} allows for a different framing of 
this problem:
we consider monotonicity classification as a binary classification task by explicitly indicating (with 
a variable) the ``slot" in the sentence for which we wish to know the polarity. 
Different positions of the variable in a partial sentence may yield a context with a different
monotonicity label;
a typical example of this is sentences featuring generalized quantifiers such as ``every", 
which may be monotone up in one argument but monotone down in another. 

\subsubsection{Input Representation}

The NLI models which we wish to start with are transformer-based models, in line with the current state of the art approaches to NLI.
Transformer models represent a sentence as a sequence of tokens: we take a naive approach to representing a context 
by indicating the variable with an uninformative `x' token. 
We refrain from using the mask token to indicate the variable, as the underlying pretrained transformer 
language models are trained to embed the 
mask token in such a way as to correspond with high-likelihood insertions in that position, which we would prefer to avoid.

\subsubsection{Dataset}
In order to ensure our monotonicity classification task does not add any unseen data (when compared to only
fine-tuning on the HELP dataset) we adapt the HELP dataset for this task.
The HELP dataset \cite{yanakaHELP} consists of a set of 
$(p(\mathbf{a}), p(\mathbf{b}))$ pairs which included labels for the entailment relationship between the full sentences, the inclusion relation between $\mathbf{a}$
and $\mathbf{b}$, and the monotonicity classification of $p$.
As such, we extract only the contexts $p$ and the monotonicity label into dataset which we will call ``HELP-contexts'',
which we split into a train, development and test set in a 50:20:30 ratio. Examples of this dataset are presented on Table~\ref{help_contexts_example_table}.
\footnote{The original HELP dataset also contains a few non-monotone examples: 
in the current state of this work, these 
are omitted in favor of a focus on the specific confusion in existing models where downwards 
monotone contexts are often treated as upwards monotone ones.}
\begin{table}[h]
\resizebox{\columnwidth}{!}{%
\begin{tabular}{@{}lll@{}}
\toprule
\textbf{Context}                                 & \textbf{Context Monotonicity} &  \\ \midrule
There were no x today.                  & downward monotone    &  \\
There is no time for x.                 & downward monotone    &  \\
Every x laughed.                        & downward monotone    &  \\
There is little if any hope for his x . & downward monotone    &  \\
Some x are allergic to wheat.           & upward monotone      &  \\
Tom is buying some flowers for x.       & upward monotone      &  \\
You can see some wild rabbits in the x. & upward monotone      &  \\ \bottomrule
\end{tabular}%
}
\caption{Examples from the HELP-contexts dataset, with respective labels.}
\label{help_contexts_example_table}
\end{table}

\subsubsection{Results}

As presented in Table~\ref{context_prediction_table}, the task of predicting the monotonicity 
of a context can be solved using fine-tuned transformer models. 
This suggests a potential path for inducing a bias for context classification
in downstream tasks such as NLI, which could benefit from better modeling of 
context monotonicity.

\begin{table}[h!]
\resizebox{\columnwidth}{!}{%
\begin{tabular}{@{}lcccccc@{}}
\toprule
\multicolumn{1}{c}{\multirow{1}{*}{\textbf{Model}}} & \multicolumn{6}{c}{\textbf{Evaluation Data}}                                                                                                                                         \\
                                                 & \multicolumn{3}{c}{\textbf{\begin{tabular}[c]{@{}c@{}}HELP-Contexts\\ Dev\end{tabular}}} & \multicolumn{3}{c}{\textbf{\begin{tabular}[c]{@{}c@{}}HELP-Contexts\\ Test\end{tabular}}} \\ 
                                                 & Precision & Recall & F1-Score & Precision & Recall & F1-Score \\ \midrule
bert-base                                           &  98.74                        & 99.08                       & 98.91                       & 98.00                        & 95.24                        & 96.54                       \\
bert-large                                          &  98.23                        & 98.88                       & 98.55                       & 97.51                        & 95.70                        & 96.57                       \\
roberta-large-mnli                                  & 99.62                        & 98.92                       & 99.26                       & 98.73                        & 96.64                        & 97.64                       \\
roberta-large                                       &  99.81                        & 99.46                       & 99.27                       & 98.99                        & 96.41                        & 97.62                       \\
roberta-base                                        &  99.81                        & 99.46                       & 99.63                       & 98.10                        & 95.56                        & 96.76                       \\
bert-base-uncased-snli                              &  98.88    & 98.19   & 8.53   & 98.92    & 97.29    & 98.07   \\ \bottomrule
\end{tabular}%
}
\caption{Performance of state of the art models for the context prediction task. Each model was trained on HELP context (training set).}
\label{context_prediction_table}
\end{table}

\subsection{Improving NLI Performance on Monotonicity Reasoning}

A few datasets have been curated to either fine-tune or evaluate NLI models with monotonicity reasoning in
mind: their uses in previous related works are detailed in table \ref{dataset_table}.
We use the following datasets for training and evaluation respectively:

\subsubsection{Training Data}
We start by once again using the HELP dataset \cite{yanakaHELP}, which was designed specifically
as a balanced additional training set for the improvement of NLI models with respect to monotonicity
reasoning. 
We create a split of this dataset which is based on the HELP-contexts dataset by assigning each example
either to the train, development or test set depending on which split its associated context $f$ 
is in the HELP-contexts dataset.
This is to ensure there is no overlap between the examples' contexts accross the three data 
partitions.
Our approach combined this strategy with an additional step based on the context
monotonicity task
described in section \ref{classif_section}. 

\subsubsection{Training Procedure}
We rely on the architecture implementations and pretrained models available with the \emph{transformers} library \cite{transformers}.
Starting with the pretrained models (which we shall henceforth tag as ``bert-base-uncased-snli'' and 
``roberta-large-mnli''),
we first fine-tune these models for the context monotonicity classification task using the training
partition of the HELP-contexts dataset.
We re-use the classification head of the pretrained models for this purpose, but only use two
output states for the classification.

\subsubsection{Evaluation Data}
Evaluation datasets are typically small, challenging and categorized by
certain target semantic phenomena. Following previous work in this area, we evaluate our approach using
the MED dataset introduced in \cite{yanakaMED}, which is annotated with monotonicity information and draws from various
expertly-curated diagnostic challenge sets in NLI such as SICK, FraCaS and the SuperGlue Diagnostic set. 
It features a balanced split between upward and downward
monotone contexts, in contrast to the benchmark MNLI dataset.
Additionally, we include evaluation on the MoNLI dataset \cite{geiger} which also features a labeled balance of upward
and downward monotone examples. However, the latter dataset's
downward monotone examples are only exemplary of contexts
featuring the negation operator \emph{``not"}, whereas 
MED \cite{yanakaMED} also includes more complex downward monotone 
operators such as generalized quantifiers and determiners.
We refer to these respective papers \cite{yanakaMED, geiger} for full breakdowns and analyses of these datasets.

\subsubsection{Baselines}
Although the main comparison to be made is the improvement introduced when including the 
context-monotonicity-classification training on top of the current state-of-the art 
roberta-large-mnli model trained on HELP, we include an additional baselines:
roberta-large-mnli fine-tuned on the \emph{monotonicity 
fragment} from the \emph{semantic fragments} \cite{larryFragments} dataset. 
The strategy in this work is the same as with the HELP dataset, but we include this in the evaluation
on the chosen challenge sets for a more complete comparison.

\subsubsection{Results}
We present the results on the challenge sets MED and MoNLI in Table \ref{nli_table}, with
a break-down by upward and downward monotone contexts.
Furthermore, we have re-run each model on the original benchmark evaluation datasets SNLI and MNLI,
with the results visible in Table \ref{tab:benchmarks}. 

\begin{table*}[ht!]
\resizebox{\textwidth}{!}{%
\begin{tabular}{@{}clcccccc@{}}
\toprule
\multirow{2}{*}{\textbf{Model}} & \multicolumn{1}{c}{\multirow{2}{*}{\textbf{\begin{tabular}[c]{@{}c@{}}Additional \\ Training Data\end{tabular}}}} & \multicolumn{6}{c}{\textbf{Challenge Datasets}}                                                                                                                                                                                     \\

                                & \multicolumn{1}{c}{}                                                                                              & \multicolumn{3}{c}{\textbf{MoNLI Test}}                                                                        & \multicolumn{3}{c}{\textbf{MED}}                                                                               \\
\multicolumn{1}{l}{}            &                                                                                                                    & \multicolumn{1}{l}{Upward Monotone} & \multicolumn{1}{l}{Downward Monotone} & \multicolumn{1}{l}{\textbf{All}} & \multicolumn{1}{l}{Upward Monotone} & \multicolumn{1}{l}{Downward Monotone} & \multicolumn{1}{l}{\textbf{All}} \\ \midrule
\hline
bert-base-uncased-snli          & -                                                                                                                  & 37.74                               & 56.49                                 & 46.15                            & 53.58                               & 43.91                                 & 49.36                            \\
bert-base-uncased-snli          & HELP                                                                                                               & 30.89                               & 85.02                                 & \textbf{55.19}                   & 43.4                                & 72.43                                 & 60.18                            \\
\textbf{bert-base-uncased-snli} & \textbf{HELP + HELP-Contexts}                                                                                      & 21.6                                & 97.67                                 & \textbf{55.19}                   & 32.56                               & 87.13                                 & \textbf{66.22}                   \\ \midrule
roberta-large-mnli              & -                                                                                                                  & 95.19                               & 5.32                                  & 58.84                            & 82.12                               & 25.76                                 & 46.09                            \\
roberta-large-mnli              & Monotonicity Fragments (Easy)                                                                                                                  & 92.68                               & 79.62                                  & 86.81                            & 74.54                               & 65.68                                 & 70.05                               \\
roberta-large-mnli              & Monotonicity Fragments (All)                                                                                                                  & 50.00                               & 50.00                                  & 50.00                            & 35.42                               & 61.80                                 & 49.78                            \\
roberta-large-mnli              & HELP                                                                                                               & 94.72                               & 98.67                                 & 96.48                            & 64.47                               & 86.25                                 & \textbf{77.4}                    \\
\textbf{roberta-large-mnli}     & \textbf{HELP + HELP-Contexts}                                                                                      & 98.78                               & 97.17                                 & \textbf{98.06}                   & 65.24                               & 85.12                                 & 76.44                           \\
\bottomrule
\end{tabular}
}
\caption{Performance of NLI models on challenge datasets designed to test performance on
monotonicity reasoning.}
\label{nli_table}
\end{table*}

\begin{table*}[ht!]
\centering
\resizebox{0.8\textwidth}{!}{%
\begin{tabular}{@{}lccccccccc@{}}
\toprule
\multicolumn{1}{c}{}                                 &                                                                                                & \multicolumn{8}{c}{\textbf{Benchmark Datasets}}                                                                                                                                                   \\
\multicolumn{1}{c}{\multirow{-2}{*}{\textbf{Model}}} & \multirow{-2}{*}{\textbf{\begin{tabular}[c]{@{}c@{}}Additional \\ Training Data\end{tabular}}} & \multicolumn{2}{c}{\textbf{MNLI (m$^*$) Dev}}      & \multicolumn{2}{c}{\textbf{MNLI (mm$^*$) Dev}}     & \multicolumn{2}{c}{\textbf{SNLI Dev}}          & \multicolumn{2}{c}{\textbf{SNLI Test}}         \\
                                                     & \multicolumn{1}{l}{}                                                                          & Acc                          & $\Delta$            & Acc                          & $\Delta$             & Acc                          & $\Delta$            & Acc                          & $\Delta$             \\ \midrule
{\color[HTML]{000000} bert-base-uncased-snli}        & {\color[HTML]{000000} -}                                                                       & {\color[HTML]{000000} 44.96} & -               & {\color[HTML]{000000} 45.52} & -               & {\color[HTML]{000000} 41.54} & -               & {\color[HTML]{000000} 40.78} & -               \\
bert-base-uncased-snli                               & HELP                                                                                           & 35.13                        & -9.83           & 34.37                        & -11.5           & 25.93                        & -15.61          & 25.92                        & -14.86          \\
\textbf{bert-base-uncased-snli}                      & \textbf{HELP  + HELP-Contexts}                                                                 & 36.91                        & \textbf{-8.05}  & 37.36                        & \textbf{-8.16}  & 36.54                        & \textbf{-5.00}  & 37.20                        & \textbf{-3.58}  \\ \midrule
roberta-large-mnli                                   & -                                                                                              & 94.11                        & -               & 93.88                        & -               & 93.33                        & -               & 93.14                        & -               \\
roberta-large-mnli                                   & HELP                                                                                           & 82.66                        & \textbf{-11.45} & 83.38                        & \textbf{-10.50} & 74.77                        & -18.56          & 74.39                        & -18.75          \\
\textbf{roberta-large-mnli}                          & \textbf{HELP  + HELP-Contexts}                                                                 & 81.00                        & -13.11          & 82.01                        & -11.87          & 82.99                        & \textbf{-10.34} & 82.31                        & \textbf{-10.83} \\
\bottomrule
\end{tabular}%
}
\caption{Fine-tuning state of the art NLI models with the aim of improving monotonicity has tended
to result in 
lower performance on the original benchmark NLI datasets. We compare these performance losses in addition to 
tracking performance on the the challenge datasets.
$^*$ MNLI (m) and (mm) refers to the matched and mismatched dataset respectively. For MNLI, only the \emph{Dev} set is publically available.}
\label{tab:benchmarks}
\end{table*}


\section{Discussion}

\paragraph{Average Performance}
Firstly, we confirm previous observations that the starting pretrained transformer
model roberta-large-mnli (which is considered
a high-performing NLI model, achieving over $93\%$ accuracy on the large MNLI development set)
has a dramatic performance imbalance with respect to context monotonicity.
The fact that performance on downward monotone contexts is as low as $5\%$ suggests that this model
perhaps routinely assumes upward monotone contexts.
It was noted in \cite{yanakaHELP} that the MNLI benchmark dataset is strongly skewed in favor of
upward monotone examples, which may account for this. 

Our approach outperforms or matches the baseline models in three of the summary accuracy scores,
and is competitive in the fourth. 
Furthermore, in most cases we observe less performance loss on the benchmark sets.

\paragraph{Performance by Monotonicity Category}

As evident from Table~\ref{nli_table}, we observe a substantial improvement for the bert-base-uncased
NLI models for downward monotone contexts.  
For the much larger roberta-large-mnli models, any gains over the model trained on HELP only are
quite small.
A common observation is the notable trade-off between accuracy on upward and downward monotone contexts;
training that improves one of these over a previous baseline generally seem to decrease performance of the 
other.
This is especially evident in the MED dataset, which is larger and representative of a more diverse
set of downward monotone examples (the MoNLI dataset is limited to the ``No" operator). 
Sensibly, a decrease in performance in upward monotone contexts also leads to a decrease in performance
on the original SNLI and MNLI datasets  \ref{tab:benchmarks} (which are skewed in favor of upward monotone examples). 
However, in most cases (except for the roberta-large-mnli model on the MNLI benchmark) our method
results in a \emph{smaller} performance loss. 

\section{Conclusion and Future Work}
Introducing context monotonicity classification into the training pipeline of NLI models 
provides some performance
gains on challenge datasets designed to test monotonicity reasoning.
We see contexts as crucial objects of study in future approaches to natural language inference.
The ability to detect their logical properties (such as monotonicity)
opens the door for hybrid neuro-symbolic NLI models and reasoning systems, 
especially in so far as dealing with out of domain insertions that may confuse 
out-of-the-box NLI models. The linguistic flexibility that transformer-based language models bring is too 
good to lose; leveraging their power in situations where only part of our sentence is in 
a model's distribution would be helpful for domain-specific use cases with many out-of-distribution nouns.
Overall, we are interested in furthering both the \emph{analysis} and \emph{improvement} of 
emergent modelling of abstract logical features in neural natural language processing models.

\bibliographystyle{acl_natbib}
\bibliography{acl2021}

\end{document}